  \providecommand\BibTeX{{%
    \normalfont B\kern-0.5em{\scshape i\kern-0.25em b}\kern-0.8em\TeX}}}
\newsavebox{\measurebox}
\pgfplotsset{compat=1.7}
\newtheorem{proposition}{Proposition}
\renewcommand{\vec}[1]{\mathbf{#1}}
\newcommand{\method}{\textsc{MemStream}}
\newcommand{\hx}{{\hat x}}
\newcommand{\bbeta}{{\bar \beta}}
\newcommand{\bd}{{\bar d}}
\begin{document}

\title{\method: Memory-Based Streaming Anomaly Detection}


\author{Siddharth Bhatia}

\affiliation{%
  \institution{National University of Singapore}
  \country{Singapore}
}
\email{siddharth@comp.nus.edu.sg}

\author{Arjit Jain}
\affiliation{%
  \institution{IIT Bombay}
  \country{India}
}
\email{arjit@cse.iitb.ac.in}

\author{Shivin Srivastava}
\affiliation{%
  \institution{National University of Singapore}
  \country{Singapore}
}
\email{shivin@comp.nus.edu.sg}

\author{Kenji Kawaguchi}
\affiliation{%
  \institution{Harvard University}
  \country{United States}
}
\email{kkawaguchi@fas.harvard.edu}

\author{Bryan Hooi}
\affiliation{%
  \institution{National University of Singapore}
  \country{Singapore}
}
\email{bhooi@comp.nus.edu.sg}


\begin{abstract}
Given a stream of entries over time in a multi-dimensional data setting where concept drift is present, how can we detect anomalous activities? Most of the existing unsupervised anomaly detection approaches seek to detect anomalous events in an offline fashion and require a large amount of data for training. This is not practical in real-life scenarios where we receive the data in a streaming manner and do not know the size of the stream beforehand. Thus, we need a data-efficient method that can detect and adapt to changing data trends, or \textit{concept drift}, in an online manner. In this work, we propose \textbf{\method}, a streaming anomaly detection framework, allowing us to detect unusual events as they occur while being resilient to concept drift. We leverage the power of a denoising autoencoder to learn representations and a memory module to learn the dynamically changing trend in data without the need for labels. We prove the optimum memory size required for effective drift handling. Furthermore, \method\ makes use of two architecture design choices to be robust to memory poisoning. Experimental results show the effectiveness of our approach compared to state-of-the-art streaming baselines using $2$ synthetic datasets and $11$ real-world datasets.
\end{abstract}


\begin{CCSXML}
<ccs2012>
<concept>
<concept_id>10010147.10010257.10010282.10010284</concept_id>
<concept_desc>Computing methodologies~Online learning settings</concept_desc>
<concept_significance>500</concept_significance>
</concept>
<concept>
<concept_id>10010147.10010257.10010258.10010260.10010229</concept_id>
<concept_desc>Computing methodologies~Anomaly detection</concept_desc>
<concept_significance>500</concept_significance>
</concept>
</ccs2012>
\end{CCSXML}

\ccsdesc[500]{Computing methodologies~Anomaly detection}
\ccsdesc[500]{Computing methodologies~Online learning settings}

\keywords{Anomaly Detection, Streams, Concept Drift}

\maketitle

\section{Introduction}
Anomaly detection is a fundamental and well-studied problem in many areas, such as cybersecurity \citep{tan2011fast,bhatia2020midas}, video surveillance \citep{Mahadevan2010AnomalyDI,ravanbakhsh2019training}, financial fraud \citep{srivastava2008credit} and healthcare \citep{schlegl2017unsupervised}. Traditional classifiers trained in a supervised learning setting do not work well in anomaly detection because of the cold-start problem, i.e., the amount of anomalous data is usually not sufficient to train the model. Therefore, anomaly detectors are trained in an unsupervised setting where the normal data distribution is learned and instances that appear unlikely under this distribution are identified as anomalous.

Developing effective methods for handling \textit{multi-aspect data} (i.e. data having multiple features or dimensions) still remains a challenge. This is especially true in an unsupervised setting, where traditional anomaly detection algorithms, such as One-Class SVM, tend to perform poorly because of the curse of dimensionality. Deep architectures such as Autoencoders \citep{hinton1994autoencoders}, because of their ability to learn multiple levels of representation, are able to achieve better performance compared to their shallow counterparts \citep{bengio2009learning}. For anomaly detection, existing deep learning based techniques include deep belief networks \citep{erfani2016high}, variational autoencoders \citep{an2015variational,xu2018unsupervised}, adversarial autoencoders \citep{beggel2019robust,lim2018doping,zhai2016deep}, and deep one-class networks \citep{chalapathy2018anomaly,pmlr-v80-ruff18a}.

The problem of anomaly detection becomes even more challenging when the data arrives in a streaming/online manner and we want to detect anomalies in real-time. For example, intrusions in cybersecurity need to be detected as soon as they arrive to minimize the harm caused. Moreover, in streaming data, there can be a drift in the distribution over time which the existing approaches \citep{Hariri2021ExtendedIF,Bhatia2021MSTREAM,Manzoor2018xStreamOD,Na2018DILOFEA,Mirsky2018KitsuneAE,guha2016robust} are unable to fully handle.

To handle concept drift in a streaming setting, our approach uses an explicit memory module. For anomaly detection, this memory can be used to store the trends of normal data that act as a baseline with which to judge incoming records. A read-only memory, in a drifting setting, is of limited use and thus should be accompanied by an appropriate memory update strategy. The records arrive over time; thus, older records in the memory might no longer be relevant to the current trends suggesting a First-In-First-Out memory replacement strategy. The introduction of memory, with an appropriate update strategy, seems to tackle some of the issues in streaming anomaly detection with concept drift. However, the system described so far does not provide a fail-safe for when an anomalous sample enters the memory and is thus susceptible to memory poisoning.

We, therefore, propose \method, which uses a denoising autoencoder \citep{denoisingae} to extract features, and a memory module to learn the dynamically changing trend, thereby avoiding the over-generalization of autoencoders (i.e. the problem of autoencoders reconstructing anomalous samples well). Our streaming framework is resilient to concept drift and we prove a theoretical bound on the size of memory for effective drift handling. Moreover, we allow quick retraining when the arriving stream becomes sufficiently different from the training data.

We also discuss two architectural design choices to make \method\ robust to memory poisoning. The first modification prevents anomalous elements from entering the memory, and the second modification deals with how the memory can be self-corrected and recovered even if it harbors anomalous elements. Finally, we discuss the effectiveness of \method\ compared to state-of-the-art streaming baselines.

In summary, the main contributions of our paper are:
\begin{enumerate}
    \item {\bf Streaming Anomaly Detection:} We propose a novel streaming approach using a denoising autoencoder and a memory module, for detecting anomalies. \method\ is resilient to concept drift and allows quick retraining.
    \item {\bf Theoretical Guarantees:} In Proposition \ref{prop:1}, we discuss the optimum memory size for effective concept drift handling. In Proposition \ref{prop:2}, we discuss the motivation behind our architecture design.
    \item {\bf Robustness to Memory Poisoning:} \method\ prevents anomalies from entering the memory and can self-correct and recover from bad memory states.
    \item {\bf Effectiveness:} Our experimental results show that \method\ convincingly outperforms $11$ state-of-the-art baselines using $2$ synthetic datasets (that we release as open-source) and $11$ popular real-world datasets.
\end{enumerate}

{\bf Reproducibility}: Our code and datasets are available on \href{https://github.com/Stream-AD/MemStream}{https://github.com/Stream-AD/MemStream}.


\section{Related Work}
\citep{chandola2009anomaly} surveys traditional anomaly detection methods including reconstruction-based approaches \citep{jolliffe1986principal,gunter2007fast,lovric2011international,candes2011robust,kim2009observe,lu2013abnormal,zhao2017spatio}, clustering-based \citep{zimek2012survey,xiong2011group,barnett1984discordancy,aggarwal2015outlier,kim2012robust}, one class classification-based \citep{scholkopf2001estimating,scholkopf2000support,williams2002comparative}. Several deep learning based methods have also been proposed for anomaly detection such as GAN-based approaches \citep{yang2020memgan,Bashar2020TAnoGANTS,ngo2019,zenati2018adversarially,deecke2018image,akcay2018ganomaly,schlegl2017unsupervised}, Energy-based \citep{kumar2019maximum,zhai2016deep}, Autoencoder-based \citep{gong2019memorizing,Su2019RobustAD,zong2018deep,xu2018unsupervised,zhou2017anomaly,Slch2016VariationalIF,an2015variational}, and RNN-based \citep{saurav2018rnn_online_anomaly}; see \citep{chalapathy2019deep,pang2020deep} for extensive surveys. However, deep learning based approaches such as MemAE \cite{gong2019memorizing} do not process the data in a streaming manner and typically require a large amount of training data in an offline setting, whereas we process the data in an online manner. Additionally, we provide theoretical analysis and are robust to memory poisoning. Anomaly detection is a vast topic by itself and cannot be fully covered in this manuscript; in this section, our review mainly focuses on methods that can detect anomalies in streams containing concept drift; see \citep{lu2018learning,gupta2013outlier} for concept drift literature and \cite{Pasricha2018IdentifyingAA,Benczr2019ReinforcementLU,Chi2018HashingFA,Shao2014PrototypebasedLO,Bai2016AnOM} for different ways to detect concept drift in streams.

As for density-based approaches, Local Outlier Factor (LOF) \citep{breunig2000lof} estimates the local density at each point, then identifies anomalies as points with much lower local density than their neighbors. DILOF \citep{Na2018DILOFEA} improves upon LOF and LOF variants \citep{Salehi2016FastME,Pokrajac2007IncrementalLO} by adopting a novel density-based sampling scheme to summarize the data, without prior assumptions on the data distribution. LUNAR \citep{goodge2021lunar} is a hybrid approach combining deep learning and LOF. However, LOF-based approaches are suitable only for lower-dimensional data due to the curse of dimensionality.

Isolation Forest (IF) \citep{liu2008isolation} constructs trees by randomly selecting features and splitting them at random split points, and then defines anomalies as points that are separated from the rest of the data at low depth values. HS-Tree \citep{Tan2011FastAD} uses an ensemble of randomly constructed half-space trees with a sliding window to detect anomalies in evolving streaming data. iForestASD \citep{Ding2013AnAD} uses a sliding window frame scheme to handle abnormal data. Random Cut Forest (RCF) \citep{guha2016robust} tries to further improve upon IF  by creating multiple random cuts (trees) of data and constructing a forest of such trees to determine whether a point is anomalous or not. Recently, \citep{Hariri2021ExtendedIF} shows that splitting by only one variable at a time introduces some biases in IF which can be overcome by using hyperplane cuts instead. They propose Extended Isolation Forest (Ex. IF) \citep{Hariri2021ExtendedIF} where the split criterion is based on a threshold set on a linear combination of randomly chosen variables instead of a threshold on a single variables value at a time. However, these approaches compute an anomaly score by traversing a tree structure that is bounded by the maximum depth parameter and the size of the sliding window, therefore they do not capture long-range dependence.

Popular streaming approaches include STORM \citep{Angiulli2007DetectingDO}, which uses a sliding window to detect global distance-based outliers in data streams with respect to the current window. RS-Hash \citep{Sathe2016SubspaceOD} uses subspace grids and randomized hashing in an ensemble to detect anomalies. For each model in the ensemble, a grid is constructed using subsets of features and data, random hashing is used to record data counts in grid cells, and the anomaly score of a data point is the log of the frequency in its hashed bins. LODA \citep{Pevn2015LodaLO} generates several weak anomaly detectors by producing many random projections of the data and then computing a density estimation histogram for each projection. The outlier scores produced are the mean negative log-likelihood according to each histogram for each point. \textsc{xStream} \citep{Manzoor2018xStreamOD} detects anomalies in feature-evolving data streams through the use of a streaming random projection scheme and ensemble of half-space chains. \textsc{MStream} \citep{Bhatia2021MSTREAM} performs feature extraction and then detects group anomalies in multi-aspect streams. Kitsune \citep{Mirsky2018KitsuneAE} is an ensemble of light-weight autoencoders for real-time anomaly detection. We compare with all these methods in Section \ref{sec:exp}.

\section{Problem}

Let $\mathcal{X} = \{x_1, x_2, \cdots\}$ be records arriving in a streaming manner. Each entry $x_i = (x_{i1}, \cdots, x_{id})$ consisting of $d$ \emph{attributes} or dimensions, where each dimension can either be categorical (e.g. IP address) or real-valued (e.g. average packet length).

Our goal is to detect anomalies in streaming data. A common phenomenon in real-world data is that the nature of the stream changes over time. These changes are generally described in terms of the statistical properties of the stream, such as the mean changes across some or all features. As the definition of the ``concept" of normal behavior changes, so does the definition of an anomaly. Thus, we need a model that is able to adapt to the dynamic trend and thereby recognize anomalous records.

\section{Algorithm}

\subsection{Motivation}

\begin{table}[!ht]
\centering
\caption{Simple toy example, consisting of a stream of records over time with a trend shift at $t=6$.}
\label{tab:toy}
\begin{tabular}{@{}rrrrr@{}}
\toprule
{\bf Time} & {\bf Feature 1} & {\bf Feature 2} & {\bf Feature 3} & {\bf ...} \\ \midrule
$1$ & $8.39$ & $1.44$ & $4.16$ & $\cdots$ \\
$2$ & $6.72$ & $4.55$ & $3.49$ & $\cdots$ \\
$3$ & $3.49$ & $2.10$ & $1.56$ & $\cdots$ \\
$4$ & $4.28$ & $0.64$ & $1.22$ & $\cdots$ \\
$5$ & $5.54$ & $2.40$ & $6.55$ & $\cdots$ \\
$6$ & $183.75$ & $132.03$ & $9.86$ & $\cdots$ \\
$7$ & $146.47$ & $128.49$ & $16.52$ & $\cdots$ \\
$8$ & $197.96$ & $97.16$ & $15.05$ & $\cdots$ \\
$9$ & $192.50$ & $89.95$ & $12.46$ & $\cdots$ \\
$10$ & $158.32$ & $10.37$ & $15.76$ & $\cdots$ \\
\bottomrule
\end{tabular}
\end{table}

Consider an attacker who hacks a particular IP address and uses it to launch denial of service attacks on a server. Modern cybersecurity systems are trained to detect and block such attacks, but this is made more challenging by changes over time, e.g. in the identification of attacking machines. This is a ``concept" drift and the security system must learn to identify such changing trends to mitigate the attacks. Consider the toy example in Table \ref{tab:toy}, comprising of a multi-dimensional temporal data stream. There is a sudden distribution change and concept drift in all attributes from time $t=5$ to $t=6$.

The main challenge for the algorithm is to detect these types of patterns in a {\bf streaming} manner within a suitable timeframe. That is, the algorithm should not give an impulsive reaction to a short-lived change in the base distribution, but also should not take too long to adapt to the dynamic trend. Note that we do not want to set any limits a priori on the duration of the anomalous activity we want to detect, or the window size after which the model should be updated to account for the concept drift.



\begin{figure*}[!t]
\begin{centering}
  \includegraphics[width=\textwidth]{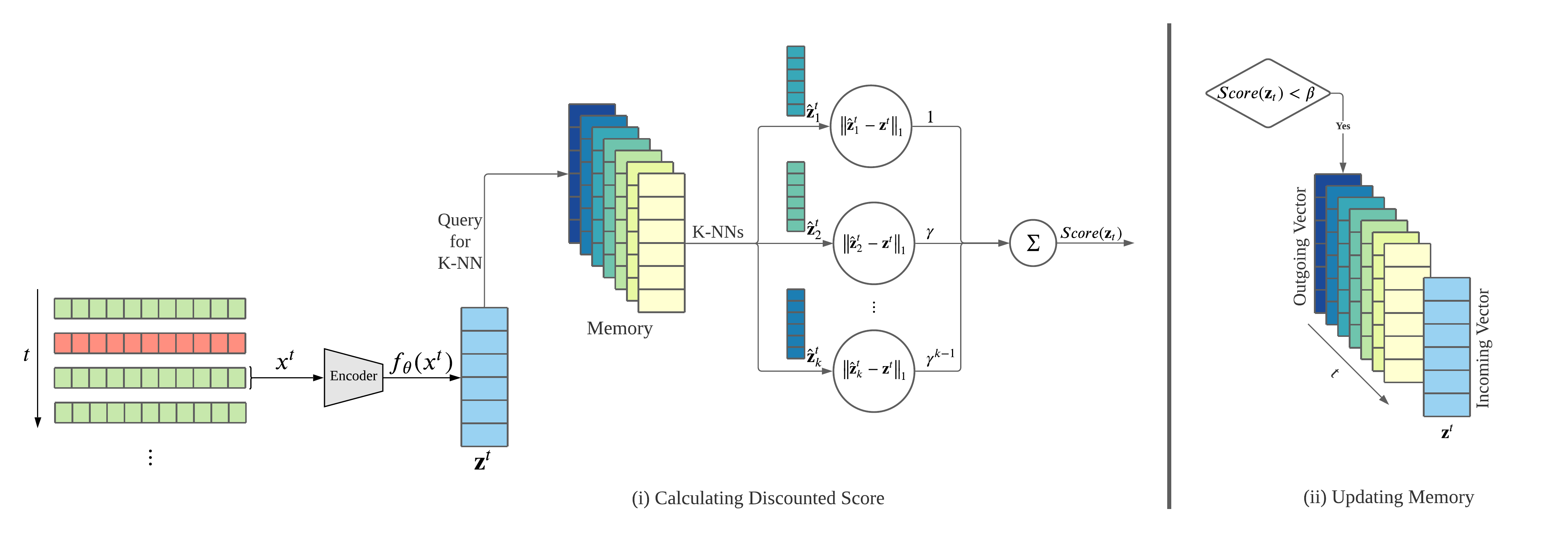}
\caption{
After an initial training of the feature extractor on a small subset of normal data, \method\ processes records in two steps: (i) It outputs anomaly scores for each record by querying the memory for $K$-nearest neighbours to the record encoding and calculating a discounted distance and (ii) It updates the memory, in a FIFO manner, if the anomaly score is within an update threshold $\beta$.}
  \label{fig:teaser}
  \end{centering}
\end{figure*}

\subsection{Overview}
As shown in Figure \ref{fig:teaser}, the proposed \method\ algorithm addresses these problems through the use of a memory augmented feature extractor that is initially trained on a small subset of normal data. The memory acts as a reserve of encodings of normal data. At a high level, the role of the feature extractor is to capture the structure of normal data. An incoming record is then scored by \emph{calculating the discounted score} based on the similarity of its encoding as evaluated against those in memory. Based on this score, if the record is deemed normal, then it is used to \emph{update the memory}. To adapt to the changing data trend, memory is required to keep track of the data drift from the original distribution. Since concept drift is generally a gradual process, the memory should maintain the temporal contiguity of records. This is achieved by following a First-In-First-Out (FIFO) memory replacement policy.

\subsection{Feature Extraction}
\label{sec:representation}
Neural Networks can learn representations using an autoencoder consisting of two parts - an encoder and a decoder \citep{Goodfellow-et-al-2016}. The encoder forms an intermediate representation of the input samples and the decoder is trained to reconstruct the input samples from their intermediate representations. Denoising autoencoders~\citep{denoisingae} partially corrupt the input data before passing it through the encoder. Intuitively, this ``forces" the network to capture the useful structure in the input distribution, pushing it to learn more robust features of the input. In our implementation, we use an additive isotropic Gaussian noise model.

\method\ allows flexibility in the choice of the feature extraction backbone. We consider Principal Component Analysis (PCA) and Information Bottleneck (IB) \citep{tishby2000information,kolchinsky2019nonlinear} as alternatives to autoencoders for feature extraction~\citep{Bhatia2021MSTREAM}. PCA-based methods are effective for off-the-shelf learning, with little to no hyperparameter tuning. Information Bottleneck can be used for learning useful features by posing the following optimization problem:
$$
\min _{p(t | x)} I(X ; T)-\beta I(T; Y)
$$
where $X$, $Y$, and $T$ are random variables. $T$ is the compressed representation of $X$, $I(X ; T)$ and $I(T ; Y)$ are the mutual information of $X$ and $T$, and of $T$ and $Y$, respectively, and $\beta$ is a Lagrange multiplier. The problem configuration and the available data greatly influence the choice of the feature extraction algorithm. We evaluate the methods to extract features in Section \ref{exp:representation}.

\subsection{Memory}
\paragraph{\textbf{Memory-based Representation:}} The memory $\boldsymbol{M}$ is a collection of $N$ real-valued $D$ dimensional vectors where $D$ is the dimension of the encodings $\vec{z}$. Given a representation $\vec{z}$, the memory is queried to retrieve the $K$-nearest neighbours $\{{\vec{\hat{z}}_1^{t},\vec{\hat{z}}_2^{t} ... \vec{\hat{z}}_K^{t}\}}$ of $\vec{z}$ in $\boldsymbol{M}$ under the $\ell_1$ norm such that:
$$||\vec{\hat{z}}_1^{t} - \vec{z}||_1 \leq ... \leq ||\vec{\hat{z}}_K^{t} - \vec{z}||_1$$
The hyper-parameter $N$ denotes the memory size. Performance of the algorithm varies depending on the value of $N$; very large or small values of $N$ would hinder the performance.

\paragraph{\textbf{Memory Update:}} Fixed memory trained on limited samples of streaming data will not be able to handle concept drift; therefore, continuous memory update is necessary. Different memory update strategies can be used such as Least Recently Used (LRU), Random Replacement (RR), and First-In-First-Out (FIFO). We observe that the FIFO memory update policy wherein the new element to be added replaces the earliest added element in the memory works well in practice. It can easily handle concept drift in streaming data as the memory retains the most recent non-anomalous samples from the distribution. We compare FIFO with LRU and RR strategies in more detail in Section \ref{exp:memory}. It is also interesting to note that \method\ can easily handle periodic patterns by adjusting the memory size: a memory of size greater than the product of the period and the sampling frequency should be sufficient to avoid flagging periodic changes as anomalies. Section \ref{sec:drift} evaluates \method's ability to detect anomalies in a periodic setting.

As shown in Algorithm \ref{alg:memstream}, the autoencoder is initially trained with a small amount of data $\mathcal{D}$ to learn how to generate data embeddings (line 2). The memory is initialized with the same training dataset (line 3). We also store the mean and standard deviation of this small training dataset. As new records arrive, the encoder performs normalization using the stored mean and standard deviation and computes the compressed representation $\vec{z}^t$ (line 6). It then computes the $K$-nearest neighbours ($\hat{\vec{z}}^t_1, \cdots,  \hat{\vec{z}}^t_K$) by querying the memory (line 8), and calculates their $\ell_1$ distance with $\vec{z}^t$ (line 10). The final discounted score is calculated as an exponentially weighted average (weighting factor $\gamma$) (line 12). This helps in making the autoencoder more robust. The discounted score is then compared against a user-defined threshold $\beta$ (line 14) and the new record is updated into the memory in a FIFO manner if the score falls within $\beta$ (line 15). This step ensures that anomalous records do not enter the memory. If the memory is updated, then the stored mean and standard deviation are also updated accordingly. The discounted score is returned as the anomaly score for the record $\vec{x}^t$ (line 17).

\begin{algorithm}
	\caption{\method\ \label{alg:memstream}}
	\KwIn{Stream of data records}
	\KwOut{Anomaly scores for each record}
	{\bf $\triangleright$ Initialization} \\
	Feature Extractor, $f_\theta$, trained using small subset of data $\mathcal{D}$ \\
	Memory, $M$, initialized as $f_\theta(\mathcal{D})$\\
	\While{new sample $\vec{x}^t$ is received:}{
	{\bf $\triangleright$ Extract features:} \\
	$\vec{z}^t = f_\theta(\vec{x}^t)$\\
	{\bf $\triangleright$ Query memory:} \\
	$\{\vec{\hat{z}_1}^{t},\vec{\hat{z}_2}^{t} ... \vec{\hat{z}_K}^{t}\}  =$ $K$-nearest neighbours of $\vec{z}^t$ in $M$\\
	{\bf $\triangleright$ Calculate distance:} \\
	$R(\vec{z}^t,\vec{\hat{z}_i}^t) = ||\vec{z}^t-\vec{\hat{z}_i}^t||_1$ \algorithmicforall\ $i \in 1..K$\\
	{\bf $\triangleright$ Assign discounted score:} \\
	$Score(\vec{z}^t) = \dfrac{\sum_{i=1}^K{\gamma^{i-1}R(\vec{z}^t,\vec{\hat{z}_i}^t)}}{\sum_{i=1}^K{\gamma^{i-1}}}$\\
	{\bf $\triangleright$ Update Memory:} \\
	\If{$Score(\vec{z}^t) < \beta$}
	{
	    Replace earliest added element in $\boldsymbol{M}$ with $\vec{z}^t$ 
	}
	{\bf $\triangleright$ Anomaly Score:}\\
	{\bf output} $Score(\vec{z}^t)$\\
	}
\end{algorithm}

\subsection{Theoretical Analysis}

\subsubsection{Relation between Memory Size and Concept Drift}
\label{sec:theorymemsize}
Our analysis on the relation of the memory size and concept drifts suggests that the memory size should be proportional to (the spread of data distributions) / (the speed of concept drifts).

As we increase the size of memory,  we can decrease the possibility of a  false positive (falsely classifying a normal sample as an anomaly). This is because it is more likely for a new data point to have a close point in a larger memory. Therefore, on the one hand, in order to decrease the false \textit{positive} rate, we want to increase the memory size. On the other hand, in order to minimize a false \textit{negative} rate  (i.e.,  failing to raise an alarm when an anomaly did happen), Proposition \ref{prop:1} suggests that the memory size should be smaller than some quantity proportional to (standard deviations of distributions) / (the speed of distributional drifts). That is, it suggests that the memory size should be  smaller than  $2 \sigma \sqrt{d(1+\epsilon)}/\alpha$, where $d$ is the input dimension, $\alpha $ measures the speed of distributional drifts, $\sigma$ is the standard deviation of distributions, and $\epsilon \in (0,1)$.  More concretely, under drifting normal distributions, the proposition shows that a new distribution after $\tau$  drifts and an original distribution before the $\tau$ drifts are sufficiently dissimilar whenever $\tau>2 \sigma \sqrt{d(1+\epsilon)}/\alpha$, so that the memory should forget about the original distribution to minimize a false-negative rate. We also discuss this effect of increasing the memory size in Section \ref{exp:memlen}.

\begin{proposition} \label{prop:1}
(Proof in Appendix \ref{app:prop1})
Define   $S_{t,\epsilon}=\{x\in \RR^d : \|x-\mu_{t}\|_2 \le \sigma\sqrt{d(1+\epsilon)}\}$.
Let $(\mu_t)_t$ be the sequence such that there exits a positive real number $\alpha$ for which $\|\mu_t- \mu_{t'}\|_2\ge (t'-t)\alpha$ for any $t<t'$.  Let  $\tau > \frac{2 \sigma \sqrt{d(1+\epsilon)}}{\alpha}$ and    ${\displaystyle x_{t}\sim \ {\mathcal {N}}(\mu_t, \sigma I)}$ for all $t\in \NN^+$. Then, for any $\epsilon > 0$ and $t \in \NN^+ $, with probability at least $1-2 \exp(-d\epsilon^2/8)$, the following holds:  $x_t \in S_{t,\epsilon}$ and $x_{t+\tau} \notin S_{t,\epsilon}$.
\end{proposition}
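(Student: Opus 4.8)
The plan is to reduce the probabilistic claim to a single deterministic event about the magnitudes of the two Gaussian noise vectors, and then bound the probability of that event with a standard chi-square tail inequality. Writing $g_s := x_s - \mu_s$, the vector $g_s$ is centered Gaussian with per-coordinate standard deviation $\sigma$, so $\|g_s\|_2^2/\sigma^2$ follows a $\chi^2_d$ distribution. I would introduce the good event
\[
E := \left\{ \|g_t\|_2 \le \sigma\sqrt{d(1+\epsilon)} \ \text{ and } \ \|g_{t+\tau}\|_2 \le \sigma\sqrt{d(1+\epsilon)} \right\},
\]
first showing that $E$ forces both desired conclusions deterministically, and then showing that $\Pr(E) \ge 1 - 2\exp(-d\epsilon^2/8)$.

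The membership $x_t \in S_{t,\epsilon}$ is immediate on $E$, since $\|x_t - \mu_t\|_2 = \|g_t\|_2 \le \sigma\sqrt{d(1+\epsilon)}$ is exactly the defining inequality of $S_{t,\epsilon}$. For the non-membership, I would write $x_{t+\tau} - \mu_t = (\mu_{t+\tau}-\mu_t) + g_{t+\tau}$ and apply the triangle inequality together with the drift hypothesis $\|\mu_{t+\tau}-\mu_t\|_2 \ge \tau\alpha$ (the $t' = t+\tau$ instance of the assumption), giving, on $E$,
\[
\|x_{t+\tau}-\mu_t\|_2 \ge \|\mu_{t+\tau}-\mu_t\|_2 - \|g_{t+\tau}\|_2 \ge \tau\alpha - \sigma\sqrt{d(1+\epsilon)}.
\]
The threshold $\tau > 2\sigma\sqrt{d(1+\epsilon)}/\alpha$ is then precisely what makes the right-hand side strictly exceed $\sigma\sqrt{d(1+\epsilon)}$, so $x_{t+\tau}\notin S_{t,\epsilon}$. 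This is where the exact form of the bound on $\tau$ is used, and it explains the factor of $2$: one copy of the radius $\sigma\sqrt{d(1+\epsilon)}$ is consumed by the ball itself and the other by the fluctuation of the query point around its own mean.

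To control $\Pr(E)$, I would bound each of the two norm events separately and take a union bound (independence is not needed). Since $\|g_s\|_2^2/\sigma^2 \sim \chi^2_d$, the event $\|g_s\|_2 > \sigma\sqrt{d(1+\epsilon)}$ is exactly the upper-tail event $\{\chi^2_d > d(1+\epsilon)\}$, and the standard Chernoff bound for the chi-square distribution gives $\Pr(\chi^2_d \ge d(1+\epsilon)) \le \exp\big(-\frac{d}{2}(\epsilon - \ln(1+\epsilon))\big)$. I would then invoke the elementary scalar inequality $\epsilon - \ln(1+\epsilon) \ge \epsilon^2/4$ to reduce each per-event failure probability to $\exp(-d\epsilon^2/8)$; summing the two failure events over the indices $t$ and $t+\tau$ yields $\Pr(E^c) \le 2\exp(-d\epsilon^2/8)$, which is the stated bound.

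The only genuinely delicate step is the concentration estimate, and even there the work is routine: the clean $\exp(-d\epsilon^2/8)$ rate collapses entirely to the scalar inequality $\epsilon - \ln(1+\epsilon) \ge \epsilon^2/4$, which I would verify by noting that the difference has value $0$ and nonnegative derivative at $\epsilon = 0$ and derivative $\epsilon(1-\epsilon)/(2(1+\epsilon)) \ge 0$ on $(0,1)$. This is exactly why the natural regime is $\epsilon \in (0,1)$ (the inequality, and hence the stated rate, does in fact fail for large $\epsilon$), so I would carry out the argument for $\epsilon \in (0,1)$. The deterministic half, by contrast, is forced rather than delicate: every inequality is tight up to the choice of radius, and the condition on $\tau$ is dictated by requiring the drift $\tau\alpha$ to beat twice that radius. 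One modeling point I would flag is the covariance convention: the reading that makes $\|x_s-\mu_s\|_2/\sigma$ concentrate at $\sqrt{d}$ is $x_s - \mu_s = \sigma g$ with $g \sim \mathcal{N}(0, I_d)$, i.e. $\sigma$ is the per-coordinate standard deviation, and I would adopt that interpretation throughout.
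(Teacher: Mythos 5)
Your proposal is correct and follows essentially the same route as the paper's proof: a union bound over the two Gaussian-norm concentration events, the triangle inequality against the drift lower bound $\|\mu_t-\mu_{t+\tau}\|_2\ge\tau\alpha$, and the condition $\tau>2\sigma\sqrt{d(1+\epsilon)}/\alpha$ to absorb both copies of the ball radius. You are in fact more explicit than the paper on the one nontrivial step --- the paper simply invokes ``the Chernoff bound'' to get $\exp(-d\epsilon^2/8)$, whereas you supply the chi-square tail bound $\exp\bigl(-\tfrac{d}{2}(\epsilon-\ln(1+\epsilon))\bigr)$ and the inequality $\epsilon-\ln(1+\epsilon)\ge\epsilon^2/4$, correctly observing that this forces $\epsilon\in(0,1)$ (a restriction the paper's prose acknowledges but its proposition statement omits).
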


\subsubsection{Architecture Choice}
\label{sec:theoryarch}
In the following, we provide one reason why we use an architecture with $d \le D$, where $d$ is the input dimension and $D$ is the embedding dimension. Namely, Proposition \eqref{prop:2} shows that if $d>D$, then there exists an anomaly constructed through perturbation of a normal sample such that the anomaly is not detectable. The construction of an anomaly in the proof is indeed unique to the case of  $d>D$, and is not applicable to the case of   $d \le D$. This provides the motivation of why we may want to use the architecture of $d \le D$, to avoid such an undetectable anomaly.

Let $\theta$ be fixed. Let  $f_\theta$ be  a  deep neural network $f_\theta: \RR^d\rightarrow \RR^D$ with ReLU and/or max-pooling as:
$f_\theta(x)=\sigma^{[L]}\big(z^{[L]}(x,\theta)\big),  z^{[l]}(x,\theta) = W^{[l]} \sigma^{(l-1)} \left(z^{[l-1]}(x,\theta)\right)$, for  $l=1,2,\dots, L$, where $\sigma^{(0)} \left(z^{[0]}(x,\theta)\right)= x$, $\sigma$ represents nonlinear function due to ReLU and/or max-pooling, and $W^{[l]}\in \RR^{N_l \times N_{l-1}}$ is a matrix of weight parameters connecting the $(l-1)$-th layer to the $l$-th layer. For the nonlinear function $\sigma$ due to ReLU and/or max-pooling, we can define $\dot \sigma^{[l]}(x,\theta)$ such that $\dot \sigma^{[l]}(x,\theta)$ is a diagonal matrix with each element being $0$ or $1$, and $\sigma^{[l]} \left(z^{[l]}(x,\theta)\right)=\dot \sigma^{[l]}(x,\theta) z^{[l]}(x,\theta)$. For any differentiable point $x$ of $f_\theta$, define $\Omega(x) = \{x'\in \RR^d:\forall l, \  \dot \sigma^{[l]}(x',\theta)= \dot\sigma^{[l]}(x,\theta)\}$ and $\Bcal_{r}(x)=\{x'\in \RR^d : \|x - x'\|_2 \le r\}$.

\begin{proposition} \label{prop:2}
(Proof in Appendix \ref{app:prop2})
Let $x$ be a differentiable point of $f_\theta$ such that   $\mathcal{B}_{r}(x) \subseteq \Omega(x)$ for some $r>0$. If $d>D$, then there exists a  $\delta \in \RR^d$ such that for any $\hx\in\RR^d$ and $\bbeta >0$,  the following holds: $\|\delta\|_{2}= r$ and 
$$
R(x,\hx) <\bbeta \implies R(x+\delta, \hx)< \bbeta.
$$
\end{proposition}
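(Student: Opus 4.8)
The plan is to exploit the exact piecewise-linearity of ReLU/max-pooling networks: on the activation region $\Omega(x)$ the map $f_\theta$ coincides with a single \emph{linear} map, and I will pick $\delta$ in the kernel of that map so that $f_\theta(x+\delta)=f_\theta(x)$ exactly. Reading $R(x,\hx)$ as the encoding distance used in Algorithm~\ref{alg:memstream} (so that $R(x,\hx)$ depends on $x$ only through the encoding $f_\theta(x)$, e.g.\ $R(x,\hx)=\|f_\theta(x)-f_\theta(\hx)\|_1$), equal encodings force $R(x+\delta,\hx)=R(x,\hx)$ for every $\hx$, and the required implication then holds trivially for every threshold $\bbeta>0$.

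Concretely, first I would unfold the recursion defining $f_\theta$ and invoke the defining property of $\Omega(x)$, namely that every diagonal activation matrix $\dot\sigma^{[l]}(\cdot,\theta)$ is constant and equal to $\dot\sigma^{[l]}(x,\theta)$ throughout $\Omega(x)$. An induction on $l=1,\dots,L$ then shows that each pre-activation $z^{[l]}(x',\theta)$ is linear in $x'$ on $\Omega(x)$, and hence that for every $x'\in\Omega(x)$ we have $f_\theta(x')=Jx'$, where
$$
J := \dot\sigma^{[L]}(x,\theta)\, W^{[L]}\, \dot\sigma^{[L-1]}(x,\theta)\, W^{[L-1]} \cdots \dot\sigma^{[1]}(x,\theta)\, W^{[1]} \in \RR^{D\times d}.
$$
Since the recursion carries no bias terms the map is homogeneous, and in particular $f_\theta(x)=Jx$ because $x\in\Omega(x)$.

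Next I would use the hypothesis $d>D$. As $J$ has only $D$ rows, $\operatorname{rank}(J)\le D<d$, so rank--nullity gives $\dim\ker(J)\ge d-D\ge 1$ and there is a nonzero $\delta_0\in\ker(J)$. Setting $\delta := r\,\delta_0/\|\delta_0\|_2$ yields $\|\delta\|_2=r$ and $J\delta=0$. Because $\|\delta\|_2=r$, the point $x+\delta$ lies in $\Bcal_r(x)\subseteq\Omega(x)$, so the linear formula applies and $f_\theta(x+\delta)=J(x+\delta)=Jx+J\delta=Jx=f_\theta(x)$. Hence $R(x+\delta,\hx)=R(x,\hx)$ for all $\hx$, which immediately gives $R(x,\hx)<\bbeta \implies R(x+\delta,\hx)<\bbeta$.

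I expect the main technical obstacle to be the first step: verifying that $f_\theta$ equals the linear map $J$ \emph{exactly} (not merely to first order) on the whole ball, which is precisely the content of the layerwise induction with a frozen activation pattern; everything afterward (the rank count and the scaling) is routine. It is worth stating explicitly why the construction is special to $d>D$: when $d\le D$ the Jacobian $J$ may have full column rank and thus trivial kernel, in which case no nonzero $\delta$ satisfies $f_\theta(x+\delta)=f_\theta(x)$, so this undetectable-anomaly construction breaks down, which is exactly the motivation for choosing the architecture with $d\le D$.
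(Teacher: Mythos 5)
Your proposal is correct and follows essentially the same route as the paper's proof in Appendix~\ref{app:prop2}: freeze the activation pattern on $\Omega(x)$ so that $f_\theta$ is exactly linear there, pick a kernel direction of the resulting $D\times d$ matrix (nontrivial since $d>D$), rescale to norm $r$, and conclude $f_\theta(x+\delta)=f_\theta(x)$ so that $R(x+\delta,\hx)=R(x,\hx)$. Your write-up is in fact somewhat more careful than the paper's, making explicit the layerwise induction, the rank--nullity count, and the fact that $R$ depends on its first argument only through the encoding.
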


\section{Experiments}
\label{sec:exp}

In this section, we aim to answer the following questions:

\begin{enumerate}[label=\textbf{Q\arabic*.}]
\item {\bf Comparison to Streaming Methods:} How accurately does \method\ detect real-world anomalies as compared to state-of-the-art streaming baseline methods?
\item {\bf Concept Drift:} How fast can \method\ adapt under concept drift?
\item {\bf Retraining:} What effect does retraining \method\ have on the accuracy and time?
\item {\bf Self-Correction and Recovery:} Does \method\ provide a self-correction mechanism to recover from ``bad" memory states?
\end{enumerate}

\paragraph{Datasets:}
\emph{KDDCUP99} \citep{KDDCup192:online} is a popular multi-aspect anomaly detection dataset. \emph{NSL-KDD} \citep{tavallaee2009detailed} solves some of the inherent problems of \emph{KDDCUP99} such as redundant and duplicate records. Recently, \citep{ring2019survey} recommends to use \emph{UNSW-NB15} \citep{moustafa2015unsw} and \emph{CICIDS}-DoS \citep{sharafaldin2018toward} after surveying more than $30$ datasets. In addition, we use seven standard ODDS \citep{oddsdatasets} datasets: Ionosphere, Cardio, Satellite, Satimage-2, Mammograph, Pima, and ForestCover. Datasets are discussed in detail in Appendix \ref{app:datasets}.

Apart from these standard datasets, we also create and use a synthetic dataset (that we plan to release publicly), \emph{Syn} with $10\%$ anomalies and $T=10000$ samples. This dataset is constructed as a superposition of a linear wave with slope $2 \times 10^{-3}$, two sinusoidal waves with time periods $0.2T$ and $0.3T$ and amplitudes $8$ and $4$, altogether with an additive Gaussian noise from a standard normal distribution. $10\%$ of the samples are chosen at random and are perturbed with uniform random noise from the interval $[3, 6]$ to simulate anomalous data. Figure \ref{fig:syndata} shows a scatterplot of the synthetic data. Anomalous samples constitute $10\%$ of the data and are represented by red dots in the scatter plot.

    \begin{figure}[!htb]
    \begin{centering}
  \includegraphics[width=\columnwidth]{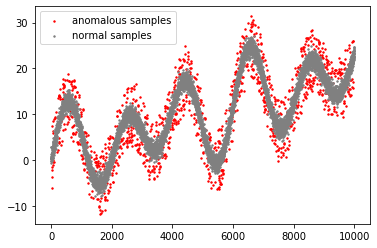}
  \caption{Scatterplot of the Synthetic Dataset.}
  \label{fig:syndata}
 \end{centering}
    \end{figure}
    
     By construction, the synthetic data distribution changes significantly over time. The presence of this concept drift makes the task challenging resulting in poor performance by baseline approaches, as seen in the Experiments. However, \method, through the use of explicit memory, can adapt to the drift in the distribution, proving its effectiveness in concept drift settings.





\begin{table*}[!htb]
\centering
\caption{AUC of \method\ and Streaming Baselines. Averaged over $5$ runs.}
\resizebox{\linewidth}{!}{
\begin{tabular}{@{}lccccccccccccc@{}}
\toprule
 \textbf{Method}
 & KDD99
 & NSL
 & UNSW
 & DoS
 & Syn.
 & Ion.
 & Cardio
 & Sat.
 & Sat.-2
 & Mamm.
 & Pima
 & Cover \\ \midrule

STORM (CIKM'07) & $0.914$ & $0.504$ & $0.810$ & $0.511$ & $0.910$ & $0.637$ & $0.507$ & $0.662$ & $0.514$ & $0.650$ & $0.528$ & $0.778$\\

{HS-Tree (IJCAI'11)} & $0.912$ & $0.845$ & $0.769$ & $0.707$ & $0.800$ & $0.764$ & $0.673$ & $0.519$ & $0.929$ & $0.832$ & $0.667$ & $0.731$ \\

{iForestASD (ICONS'13)} & $0.575$ & $0.500$ & $0.557$ & $0.529$ & $0.501$ & $0.694$ & $0.515$ & $0.504$ & $0.554$ & $0.574$ & $0.525$ & $0.603$ \\
 
{RS-Hash (ICDM'16)} & $0.859$ & $0.701$ & $0.778$ & $0.527$ & $0.921$ & $0.772$ & $0.532$ & $0.675$ & $0.685$ & $0.773$ & $0.562$ & $0.640$ \\
  
{RCF (ICML'16)} & $0.791$ & $0.745$ & $0.512$ & $0.514$ & $0.774$ & $0.675$ & $0.617$ & $0.552$ & $0.738$ & $0.755$ & $0.571$ & $0.586$ \\

{LODA (ML'16)} & $0.500$ & $0.500$ & $---$ & $0.500$ & $0.506$ & $0.503$ & $0.501$ & $0.500$ & $0.500$ & $0.500$ & $0.502$ & $0.500$\\
 
Kitsune (NDSS'18) & $0.525$ & $0.659$ & $0.794$ & $0.907$ & $---$ & $0.514$ & $0.966$ & $0.665$ & $0.973$ & $0.592$ & $0.511$ & $0.888$\\
 
{DILOF (KDD'18)} & $0.535$ & $0.821$ & $0.737$ & $0.613$ & $0.703$ & $\bf 0.928$ & $0.570$ & $0.561$ & $0.563$ & $0.733$ & $0.543$ & $0.688$ \\

{\textsc{xStream} (KDD'18)} & $0.957$ & $0.552$ & $0.804$ & $0.800$ & $0.539$ & $0.847$ & $0.918$ & $0.677$ & $\bf 0.996$ & $0.856$ & $0.663$ & $0.894$ \\
 
{\textsc{MStream} (WWW'21)} & $0.844$ & $0.544$ & $0.860$ & $0.930$ & $0.505$ & $0.670$ & $\bf 0.986$ & $0.563$ & $0.958$ & $0.567$ & $0.529$ & $0.874$ \\
 
{Ex. IF (TKDE'21)} & $0.874$ & $0.767$ & $0.541$ & $0.734$ & $---$ & $0.872$ & $0.921$ & $0.716$ & $0.995$ & $0.867$ & $0.672$ & $0.902$ \\

{\textbf{\method}} & $\bf 0.980$ & $\bf 0.978$ & $\bf 0.972$ & $\bf 0.938$ & $\bf 0.955$ & $0.821$ & $0.884$ & $\bf 0.727$ & $0.991$ & $\bf 0.894$ & $\bf 0.742$ & $\bf 0.952$ \\
 
\bottomrule
\label{tab:auc}
\end{tabular}
}
\end{table*}

\paragraph{Experimental Setup}
All methods output an anomaly score for every record (higher is more anomalous). We report the ROC-AUC (Area under the Receiver Operating Characteristic curve). All experiments, unless explicitly specified, are performed $5$ times for each parameter group, and the mean values are reported. All experiments are carried out on a $2.6 GHz$  Intel Core \textit{i}$7$ system with $16 GB$ RAM and running Mac OS Catalina $10.15.5$. Following \textsc{MStream}, we take the output dimension as $8$ for PCA and IB. For \method-PCA, we use the open-source implementation available in the scikit-learn \citep{scikit-learn} library of Principal Component Analysis. For \method-IB, we used an online implementation \footnote{\url{https://github.com/burklight/nonlinear-IB-PyTorch}} for the underlying Information Bottleneck algorithm with $\beta=0.5$ and the variance parameter set to $1$. The network was implemented as a $2$ layer binary classifier. For \method, the encoder and decoder were implemented as single layer Neural Nets with ReLU activation. We used Adam Optimizer to train both these networks with $\beta_1 = 0.9$ and $\beta_2 = 0.999$. Grid Search was used for hyperparameter tuning:  Learning Rate was set to $1\mathrm{e}-2$, and the number of epochs was set to $5000$. The memory size $N$, and the value of the threshold $\beta$, can be found in Table \ref{tab:params} in the Appendix. Memory size for each intrusion detection dataset was searched in $\{256, 512, 1024, 2048\}$. For multi-dimensional point datasets, if the size of the dataset was less than $2000$, $N$ was searched in $\{4, 8, 16, 32, 64\}$, and if it was greater than $2000$, then $N$ was searched in $\{128, 256, 512, 1024, 2048\}$. The threshold $\beta$, is an important parameter in our algorithm, and hence we adopt a finer search strategy. For each dataset, and method, $\beta$ was searched in $\{10, 1, 0.1, 0.001, 0.0001\}$. Unless stated otherwise, AE was used for feature extraction with output dimension $D=2d$, and with a FIFO memory update policy. The KNN coefficient $\gamma$  was set to $0$ for all experiments. For the synthetic dataset, we use a memory size of $N=16$. For all methods, across all datasets, the number of training samples used is equal to the memory size.

\subsection{Comparison to Streaming Methods}



Table \ref{tab:auc} shows the AUC of \method\ and state-of-the-art streaming baselines. We use open-sourced implementations of DILOF \citep{Na2018DILOFEA}, \textsc{xStream} \citep{Manzoor2018xStreamOD}, \textsc{MStream} \citep{Bhatia2021MSTREAM}, Extended Isolation Forest (Ex. IF) \citep{Hariri2021ExtendedIF}, provided by the authors, following parameter settings as suggested in the original papers. For STORM \citep{Angiulli2007DetectingDO}, HS-Tree \citep{Tan2011FastAD}, iForestASD \citep{Ding2013AnAD}, RS-Hash \citep{Sathe2016SubspaceOD}, Random Cut Forest (RCF) \citep{guha2016robust}, LODA \citep{Pevn2015LodaLO}, Kitsune \citep{Mirsky2018KitsuneAE}, we use the open-source library PySAD \citep{pysad} implementation, following original parameters. Baseline parameters are listed in Appendix \ref{app:baselines}. LODA could not process the large \emph{UNSW} dataset. Ex. IF and Kitsune are unable to run on datasets with just one field, therefore their results with \emph{Syn} are not reported.

Random subspace generation in RS-Hash includes many irrelevant features into subspaces while omitting relevant features in high-dimensional data. The objective of random projection in LODA retains the pairwise distances of the original space, therefore it fails to provide accurate outlier estimation. \textsc{xStream} performs well in \emph{KDD99}, \textsc{MStream} performs well in \emph{DoS}, however, note that \method\ achieves statistically significant improvements in AUC scores over baseline methods. Moreover, baselines are unable to catch complicated drift scenarios in \emph{NSL}, \emph{UNSW} and \emph{Syn}.

\begin{table}[!htb]
\centering
\caption{AUC-PR and Time required to run \method\ and Streaming Baselines on \emph{NSL-KDD}. \method\ provides statistically significant (p value $< 0.001$) improvements over baseline methods.}
\begin{tabular}{@{}lcc@{}}
\toprule
 \textbf{Method}
 & \textbf{AUC-PR}
 & \textbf{Time (s)}
 \\ \midrule
STORM & $0.681 \pm 0.000$ & $754$\\
{HS-Tree} & $0.709 \pm 0.063$ & $306$\\
{iForestASD} & $0.534 \pm 0.000$ & $19876$\\
{RS-Hash} & $0.500 \pm 0.140$  & $892$\\
{RCF} & $0.664 \pm 0.006$ & $665$\\
{LODA} & $0.734 \pm 0.067$ & $2617$\\
Kitsune & $0.673 \pm 0.000$ & $821$\\
{DILOF} & $0.822 \pm 0.000$ & $260$\\
{\textsc{xStream}} & $0.541 \pm 0.070$ & $34$\\
{\textsc{MStream}} & $0.510 \pm 0.000$ & $0.08$\\
{Ex. IF} & $0.659 \pm 0.014$ & $889$\\
\textbf{\method} & $\mathbf{0.959} \pm 0.002$ & $55$\\
\bottomrule
\label{tab:time}
\end{tabular}
\end{table}

Table \ref{tab:time} reports the running AUC-PR scores of \method\ and baseline methods on the \emph{NSL-KDD} dataset, as well as their corresponding running times. Note that not only does \method\ greatly outperform baselines on AUC-PR, but also does so in a time-efficient manner.




\subsection{Concept Drift}
\label{sec:drift}

We next investigate \method's performance under concept drift, particularly how fast it can adapt. As shown in Figure \ref{fig:drift} (top), we create a synthetic data set which covers a wide variety of drifts scenarios: (a) point anomalies: $T=19000$ (b) sudden frequency change: $T\in[5000, 10000]$ (c) continuous concept drift: $T\in [15000, 17500]$ (d) sudden concept drift due to mean change: $T\in [12500, 15000]$. Anomaly scores are clipped at $T=12500$ and $T=19000$ for better visibility.

\method\ is able to handle all the above-mentioned concept drift scenarios as is evident in Figure \ref{fig:drift} (bottom). We observe that \method\ assigns high scores corresponding to trend-changing events (e.g. $T=1000, 5000, 10000$ etc.) which produce anomalies, then with a gradual decrease in scores thereafter as it \emph{adapts} successfully to the new distribution. Note that \method\ can also adapt to periodic streams. For the first cycle of the sine wave $T\in[1000, 2000]$, the anomalous scores are relatively high. However, as more and more normal samples are seen from the sine distribution, \method\ adapts to it.


\begin{figure}[!htb]
\centering
  \includegraphics[width=\columnwidth]{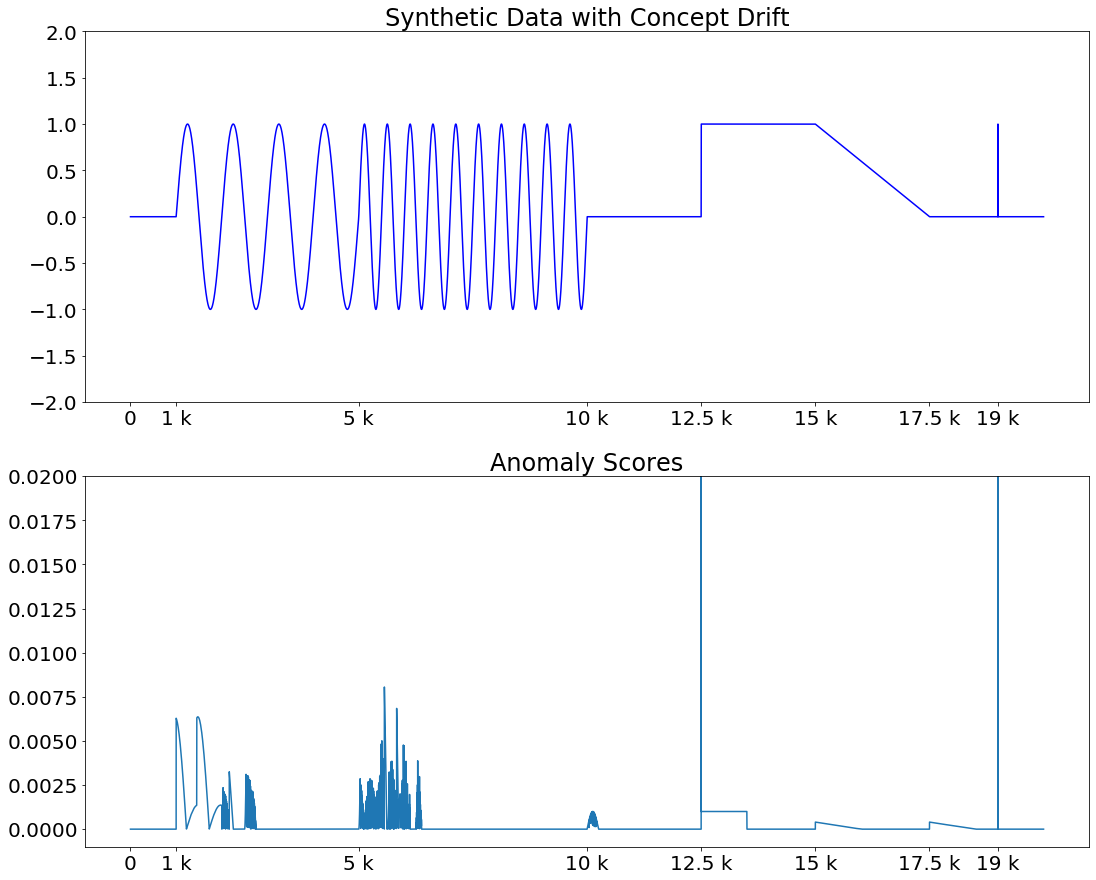}
  \captionof{figure}{(Top): Synthetic data with drift. (Bottom): Anomaly Scores output by \method\ demonstrating resilience to drift.}
  \label{fig:drift}
\end{figure}

\subsection{Retraining}
\label{sec:retrain}
 The need for re-training is especially prevalent in very long drifting streams where the feature extractor, trained on the small subset of the initial normal data $\mathcal{D}$, starts facing record data sufficiently different from its training data. In this experiment, we test the ability of \method\ to accommodate this more challenging setting by periodically retraining its feature extractor. Fine-tuning is performed at regular intervals distributed uniformly across the stream, i.e. to implement $k$ fine-tunings on a stream of size $S$, the first fine-tuning occurs at $\left \lfloor \frac{S}{k+1} \right \rfloor$. Figure \ref{fig:retrain} shows the AUC and time taken to fine-tune \method\ on \emph{CICIDS-DoS} with a stream size greater than $1M$ records. Note that as we increase the number of times \method\ is fine-tuned, we observe large gains in AUC with the negligible time difference.

\begin{table*}[!htb]
\centering
\caption{Effect of Memory Size on the AUC in \method\ on \emph{NSL-KDD} dataset.}
\label{tab:memoryeffect}
\begin{tabular}{@{}rcccccccccccc@{}}
\toprule
\textbf{Memory Size} & $2^4$ & $2^5$ & $2^6$ & $2^7$ & $2^8$ & $2^9$ & $2^{10}$ & $2^{11}$ & $2^{12}$ & $2^{13}$ & $2^{14}$ \\

\midrule
\textbf{AUC} & $0.670$ & $0.649$ & $0.932$ & $0.936$ & $0.923$ & $0.950$ & $0.972$ & $0.976$ & $0.985$ & $0.989$ & $0.991$ \\
\bottomrule
\end{tabular}
\end{table*}

\begin{figure}[!htb]
\begin{tikzpicture}
\pgfplotsset{
      scale only axis,
      width=0.33\textwidth,height=0.3\textwidth
  }

  \begin{axis}[
    axis y line*=left,
    xlabel=Number of times Fine-Tuned,
    xtick = {0, 1, 2, 3, 4, 5, 6, 7, 8, 9},
    ymin=0.8,
    ymax=1.0,
    tick label style={font=\normalsize},
    ylabel = {AUC},
  ]
    \addplot[red!70!black,mark=*] coordinates {(0, 0.8302751955)
(1, 0.8974884823)
(2, 0.9170457649)
(3, 0.9328035165)
(4, 0.9317918228)
(5, 0.9468332062)
(6, 0.9492323078)
(7, 0.9477016701)
(8, 0.947640945)
(9, 0.9541160801)
    };\label{auc}
    \end{axis}

    \begin{axis}[
      axis y line*=right,
      axis x line=none,
      xtick = {0, 1, 2, 3, 4, 5, 6, 7, 8, 9},
      ymode = log,
      log ticks with fixed point,
        ytick = {10, 100, 500, 1000},
        ymin=10,
        ymax=1000,
        yticklabels = {$10$, $100$, $500$, $1000$},
        tick label style={font=\normalsize},
        ylabel = {Time (in s)},
        legend style={at={(0.5,0.1)},legend columns=2,fill=none,draw=black,anchor=center,align=center},
    ]

    \addlegendimage{/pgfplots/refstyle=auc}\addlegendentry{AUC}
    \addplot[brown,mark=star] coordinates {(0, 481.9602451)
(1, 518.6869633)
(2, 511.7169969)
(3, 486.0186646)
(4, 533.2838132)
(5, 501.0282154)
(6, 504.4203665)
(7, 530.9628208)
(8, 562.8917241)
(9, 546.230355)
    };     
    \addlegendentry{Time}; 
  \end{axis}


\end{tikzpicture}
\caption{Retraining effect on the AUC and time for \emph{CICIDS-DOS}.}
\label{fig:retrain}
\end{figure}
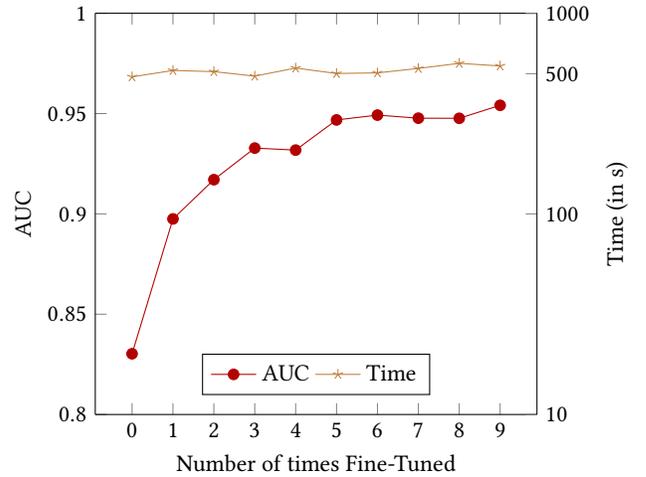


\subsection{Self-Correction and Recovery}
\label{sec:recovery}

Consider the scenario where an anomalous element enters into the memory. A particularly catastrophic outcome of this event could be the cascading effect where more and more anomalous samples replace the normal elements in the memory due to their similarity. This can ultimately lead to a situation where the memory solely consists of anomalous samples. These ``Group Anomaly" events are fairly common in intrusion detection settings. We show that this issue is mitigated by the use of $K$-nearest neighbours in our approach. We simulate the above setting by adding the first labeled anomalous element in memory during the initialization.

\begin{table}[!htb]
\centering
\caption{Performance of \method\ on \emph{NSL-KDD} dataset after adding an anomalous element in memory when $K=3$ and for different values of discount factor $\gamma$.}
\label{tab:recovery}
\begin{tabular}{lrr}
    \toprule
    \textbf{$\gamma$} & High $\beta (=1)$ & Appropriate $\beta (=0.001)$ \\
    \midrule
    $0$ & $0.771$ & $0.933$\\ 
    $0.25$ & $0.828$ & $0.966$\\ 
    $0.5$ & $0.848$ & $0.967$\\ 
    $1$ & $0.888$ & $0.965$\\
    \bottomrule
  \end{tabular}
\end{table}

In Table \ref{tab:recovery}, a high $\beta$ allows anomalous elements to also enter the memory. In the absence of $K$-nearest neighbour discounting (i.e. $\gamma=0$), a high $\beta$ value algorithm succumbs to the above-described scenario resulting in poor performance. On the other hand, with discounting (i.e. $\gamma \neq 0$), the algorithm is able to ``recover" itself, and as a result, the performance does not suffer considerably. Note that when the threshold $\beta$ is in its appropriate range, the algorithm is robust to the choice of discount factor $\gamma$.


\subsection{Ablations}
\label{sec:ablations}

\begin{table}[!htb]
    \centering
    \caption{Ablation study for different components of \method\ on \emph{KDDCUP99}.}
    \label{tab:ablations}
    \begin{tabular}{ll|rrrr}
    \toprule
         & \textbf{Component} & \multicolumn{4}{c}{\bf Ablations}\\
         \midrule
         (a) & Memory & None & LRU & RR & FIFO\\
         & Update & $0.938$& $0.946$& $0.946$& $0.980$\\
         \midrule
         (b) & Feature & Identity &PCA & IB & AE\\
         & Extraction & $0.822$& $0.863$ & $0.959$& $0.980$\\
         \midrule
         (c) & Memory & 128 & 256 & 512 & 1024\\
         & Length ($N$) & $0.950$& $0.980$& $0.946$& $0.811$\\
         \midrule
         (d) & Output & $d/2$ & $d$ & $2d$ & $5d$\\
         & Dimension ($D$) & $0.951$& $0.928$& $0.980$& $0.983$\\
         \midrule
         (e) & Update & 1 & 0.1 & 0.01 & 0.001\\
         & Threshold ($\beta$) & $0.980$& $0.938$& $0.938$& $0.938$ \\
         \midrule
         (f) & KNN & 0 & 0.25 & 0.5 & 1\\
         & coefficient ($\gamma$) & $0.980$& $0.939$& $0.937$ & $0.936$ \\
    \bottomrule
    \end{tabular}
\end{table}

\textbf{(a) Memory Update:}
\label{exp:memory}
Taking inspiration from the work done in cache replacement policies in computer architecture, we replace the FIFO memory update policy with Least Recently Used (LRU) and Random Replacement (RR) policies. Table \ref{tab:ablations}(a) reports results with these three and when no memory update is performed on the \emph{KDDCUP99} dataset. Note that FIFO outperforms other policies. This is due to the temporal locality preserving property of the FIFO policy to keep track of the current trend. LRU and RR policies do not maintain a true snapshot of the stream in the memory and are thus unable to learn the changing trend.

\textbf{(b) Feature Extraction:}
\label{exp:representation}
Table \ref{tab:ablations}(b) shows experiments with different methods for feature extraction discussed in Section \ref{sec:representation}. Autoencoder outperforms both PCA and Information Bottleneck approaches.

\textbf{(c) Memory Length ($N$):}
\label{exp:memlen} 
As we noted in Section \ref{sec:theorymemsize}, increasing $N$ can decrease the false positive rate, but also increase the false negative rate. We observe this effect empirically in Table \ref{tab:ablations}(c), where the sweet spot is found at $N=256$, and increasing memory length further degrades performance.  An additional experiment demonstrating the effect of memory size is discussed in Table \ref{tab:memoryeffect}. We note that very large or very small values of N would hinder the algorithm performance as the memory will not be able to capture the current trend properly. A very large `N' will not ensure that the current trend is learned exclusively and the memory would always be contaminated by representatives of the previous trend. On the other hand, a very small `N' will not allow enough representatives from the current trend and thus in both cases, the performance of the algorithm will be sub-optimal.

\textbf{(d) Output Dimension ($D$):}
\label{exp:output}
In Section \ref{sec:theoryarch}, we motivate why we use an architecture with $D >= d$. In Table \ref{tab:ablations}(d), we compare architectures with different output dimension $D$ as a function of the input dimension $d$. We find that $D=d/2$ outperforms an architecture with $D=d$, owing to the features learning by dimensionality reduction. Note that \method\ performs well for large $D$.

\textbf{(e) Update Threshold ($\beta$):}
\label{exp:threshold}
The update threshold is used to judge records based on their anomaly scores and determine whether they should update the memory. A high $\beta$ corresponds to frequent updates to the memory, whereas a low $\beta$ seldom allows memory updates. Thus, $\beta$ can capture our belief about how frequently the memory should be updated, or how close is the stream to the initial data distribution. From Table \ref{tab:ablations}(e), we notice that for \emph{KDDCUP99}, a drifting dataset, a more flexible threshold ($\beta=1$) performs well, and more stringent thresholds perform similar to no memory updates (Table \ref{tab:ablations}(a)).

\textbf{(f) KNN coefficient ($\gamma$):}
\label{exp:knncoeff}
In Section \ref{sec:recovery}, we discussed the importance of the KNN coefficient $\gamma$ in the Self-Recovery Mechanism. Table \ref{tab:ablations}(f) compares different settings of $\gamma$, without memory poisoning.

\section{Conclusion}
We propose \method, a novel memory augmented feature extractor framework for streaming anomaly detection in multi-dimensional data and concept drift settings. \method\ uses a denoising autoencoder to extract features and a memory module with a FIFO replacement policy to learn the dynamically changing trends. Moreover, \method\ allows quick retraining when the arriving stream becomes sufficiently different from the training data. We give a theoretical guarantee on the relation between the memory size and the concept drift. Furthermore, \method\ prevents memory poisoning by using (1) a discounting $K$-nearest neighbour memory leading to a unique self-correcting and recovering mechanism; (2) a theoretically motivated architecture design choice. \method\ outperforms $11$ state-of-the-art streaming methods. Future work could consider more tailored memory replacement policies, e.g. by assigning different weights to the memory elements.

\FloatBarrier

\bibliographystyle{unsrt}
\bibliography{references}


\begin{table*}[!htb]
\centering
\caption{Statistics of the datasets.}
\begin{tabular}{@{}lccccccccccccc@{}}
\toprule
 
 & KDD99
 & NSL
 & UNSW
 & DoS
 & Syn.
 & Ion.
 & Cardio
 & Sat.
 & Sat.-2
 & Mamm.
 & Pima
 & Cover \\ \midrule

\textbf{Records} & $494,021$ & $125,973$ & $2,540,044$ & $1,048,575$ & $10,000$ & $351$ & $1831$ & $6435$ & $5803$ & $11183$ & $768$ & $286048$ \\

\textbf{Dimensions} & $121$ & $126$ & $122$ & $95$ & $1$ & $33$ & $21$ & $36$ & $36$ & $6$ & $8$ & $10$ \\

\bottomrule
\label{tab:datasets}
\end{tabular}
\end{table*}

\begin{table*}[!htb]
\centering
\caption{Memory Length and Update Threshold used for the different datasets}
\begin{tabular}{@{}lccccccccccccc@{}}
\toprule
 \textbf{Method}
 & KDD99
 & NSL
 & UNSW
 & DoS
 & Syn.
 & Ion.
 & Cardio
 & Sat.
 & Sat.-2
 & Mamm.
 & Pima
 & Cover \\ \midrule

$N$ & $256$ & $2048$ & $2048$ & $2048$ & $16$ & $4$ & $64$ & $32$ & $256$ & $128$ & $64$ & $2048$\\

$\beta$ & $1$ & $0.1$ & $0.1$ & $0.1$ & $1$ & $0.001$ & $1$ & $0.01$ & $10$ & $0.1$ & $0.001$ & $0.0001$  \\

\bottomrule
\label{tab:params}
\end{tabular}
\end{table*}

\appendix
\section*{Appendix}

\setcounter{proposition}{0}

\section{Proofs}

\begin{proposition} \label{app:prop1}
Define   $S_{t,\epsilon}=\{x\in \RR^d : \|x-\mu_{t}\|_2 \le \sigma\sqrt{d(1+\epsilon)}\}$.
Let $(\mu_t)_t$ be the sequence such that there exits a positive real number $\alpha$ for which $\|\mu_t- \mu_{t'}\|_2\ge (t'-t)\alpha$ for any $t<t'$.  Let  $\tau > \frac{2 \sigma \sqrt{d(1+\epsilon)}}{\alpha}$ and    ${\displaystyle x_{t}\sim \ {\mathcal {N}}(\mu_t, \sigma I)}$ for all $t\in \NN^+$. Then, for any $\epsilon > 0$ and $t \in \NN^+ $, with probability at least $1-2 \exp(-d\epsilon^2/8)$, the following holds:  $x_t \in S_{t,\epsilon}$ and $x_{t+\tau} \notin S_{t,\epsilon}$.
\end{proposition}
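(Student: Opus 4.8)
The plan is to isolate a single concentration event per sample and then combine the two claims by a union bound. For a sample $x_s \sim \mathcal{N}(\mu_s, \sigma I)$ the vector $x_s-\mu_s$ is a centered Gaussian, so $\|x_s-\mu_s\|_2^2/\sigma^2$ follows a $\chi^2_d$ distribution (reading $\sigma I$ so that $\sigma$ is the per-coordinate standard deviation). Crucially, $x_s\in S_{s,\epsilon}$ is exactly the event $\|x_s-\mu_s\|_2^2 \le \sigma^2 d(1+\epsilon)$, i.e. $\chi^2_d \le d(1+\epsilon)$, so the entire argument rests on the upper-tail concentration $\Pr[\chi^2_d \ge d(1+\epsilon)] \le \exp(-d\epsilon^2/8)$, which I would invoke as a standard Laurent--Massart / Johnson--Lindenstrauss-style bound. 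Accordingly I define the good events $A=\{\|x_t-\mu_t\|_2\le \sigma\sqrt{d(1+\epsilon)}\}$ and $B=\{\|x_{t+\tau}-\mu_{t+\tau}\|_2\le \sigma\sqrt{d(1+\epsilon)}\}$, each with complement probability at most $\exp(-d\epsilon^2/8)$. The first claim $x_t\in S_{t,\epsilon}$ is then immediate, since $A$ is literally its definition.

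For the second claim I would work on the event $B$ and lower-bound the distance of $x_{t+\tau}$ to the \emph{old} center $\mu_t$ by the reverse triangle inequality,
$$\|x_{t+\tau}-\mu_t\|_2 \ge \|\mu_{t+\tau}-\mu_t\|_2 - \|x_{t+\tau}-\mu_{t+\tau}\|_2 .$$
The drift hypothesis applied with $t'=t+\tau$ gives $\|\mu_{t+\tau}-\mu_t\|_2 \ge \tau\alpha$, and the assumption $\tau > 2\sigma\sqrt{d(1+\epsilon)}/\alpha$ upgrades this to $\tau\alpha > 2\sigma\sqrt{d(1+\epsilon)}$. On $B$ the subtracted term is at most $\sigma\sqrt{d(1+\epsilon)}$, so the right-hand side strictly exceeds $2\sigma\sqrt{d(1+\epsilon)} - \sigma\sqrt{d(1+\epsilon)} = \sigma\sqrt{d(1+\epsilon)}$, which is exactly the radius of $S_{t,\epsilon}$; hence $x_{t+\tau}\notin S_{t,\epsilon}$.

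It remains to combine the two pieces: on $A\cap B$ both inclusions hold simultaneously, and a union bound (which needs no independence of the samples) gives $\Pr[A\cap B]\ge 1-\Pr[A^c]-\Pr[B^c] \ge 1-2\exp(-d\epsilon^2/8)$, matching the stated probability. The only genuinely technical ingredient is the chi-square tail bound, so the main obstacle is simply choosing the concentration inequality with the right constant (the factor $8$), which also pins down the usable range of $\epsilon$. The one modeling point to handle carefully is the normalization convention in $\mathcal{N}(\mu_s,\sigma I)$: one must scale out $\sigma$ consistently so that the geometric threshold $\sigma\sqrt{d(1+\epsilon)}$ corresponds to the chi-square level $d(1+\epsilon)$. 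Once that is fixed, the rest is the elementary triangle-inequality estimate above, and I anticipate no further subtlety.
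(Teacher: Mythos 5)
Your proposal is correct and follows essentially the same route as the paper's proof: a Gaussian-norm (chi-square) tail bound applied to $x_t$ and $x_{t+\tau}$ around their own means, a union bound, and the (reverse) triangle inequality combined with the drift assumption $\|\mu_t-\mu_{t+\tau}\|_2\ge\tau\alpha$ and the choice of $\tau$ to push $x_{t+\tau}$ outside $S_{t,\epsilon}$. Your remark that the constant $8$ in the tail bound restricts the usable range of $\epsilon$ is a fair point that the paper itself glosses over (it states the result for any $\epsilon>0$ but only discusses $\epsilon\in(0,1)$ in the main text).
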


\begin{proof}
Let us write $\bd(x,x')=\|x-x'\|_2$. Then, by the triangle inequality,
\begin{align} \label{eq:1}
\bd(\mu_t, \mu_{t+\tau}) \le \bd(\mu_t, x_{t+\tau})+\bd( x_{t+\tau}, \mu_{t+\tau}). 
\end{align} 
By using the property of  the Gaussian distribution with   ${\displaystyle z_{t+\tau}\sim \ {\mathcal {N}}(0,  I)}$, we have that 
\begin{align*}
& \Pr(\| x_{t+\tau}-\mu_{t+\tau}\|_{2}< \sigma\sqrt{d(1+\epsilon)} ) \\
& = \Pr(\| \sigma z_{t+\tau}+\mu_{t+\tau}-\mu_{t+\tau}\|_{2}< \sigma\sqrt{d(1+\epsilon)} ) \\
& =\Pr(\| z_{t+\tau}\|_{2}^{2}< d(1+\epsilon) ). 
\end{align*}

Thus, using the Chernoff bound for the Standard normal distribution  for  ${\displaystyle z_{t+\tau}\sim \ {\mathcal {N}}(0,  I)}$, we have that
\begin{align*}
\Pr(\| x_{t+\tau}-\mu_{t+\tau}\|_{2}>\sigma\sqrt{d(1+\epsilon)} ) \le \exp\left(-\frac{d\epsilon^2}{8}\right).
\end{align*}
Similarly, 
\begin{align*}
\Pr(\| x_{t}-\mu_{t}\|_{2}>\sigma\sqrt{d(1+\epsilon)} ) \le \exp\left(-\frac{d\epsilon^2}{8}\right).
\end{align*}
By tanking union hounds, we have  that with probability at least $1-2 \exp(-d\epsilon^2/8)$,
\begin{align} \label{eq:2}
\| x_{t+\tau}-\mu_{t+\tau}\|_{2}\le\sigma\sqrt{d(1+\epsilon)},
\end{align}
and
\begin{align} \label{eq:3}
\| x_{t}-\mu_{t}\|_{2}\le\sigma\sqrt{d(1+\epsilon)}.
\end{align}
By using the upper bound of \eqref{eq:2} in \eqref{eq:1}, we have that $\bd(\mu_t, \mu_{t+\tau}) \le \bd(\mu_t, x_{t+\tau})+\sigma\sqrt{d(1+\epsilon)}$,
which implies that  
\begin{align*} 
\bd(\mu_t, \mu_{t+\tau}) -\sigma\sqrt{d(1+\epsilon)}\le \bd(\mu_t, x_{t+\tau}).
\end{align*}
Using the assumption on  $(\mu_t)_t$, 
\begin{align*} 
\tau\alpha-\sigma\sqrt{d(1+\epsilon)}\le \bd(\mu_t, x_{t+\tau}).
\end{align*}
Using the definition of $\tau$, 
\begin{align*} 
 \sigma \sqrt{d(1+\epsilon)}< \bd(\mu_t, x_{t+\tau}).
\end{align*}
This means that $x_{t+\tau} \notin S_{t,\epsilon}$. On the other hand, equation \eqref{eq:3} shows that $ x_{t} \in S_{t,\epsilon}$. 
\end{proof}

\begin{proposition} \label{app:prop2}
Let $x$ be a differentiable point of $f_\theta$ such that   $\mathcal{B}_{r}(x) \subseteq \Omega(x)$ for some $r>0$. If $d>D$, then there exists a  $\delta \in \RR^d$ such that for any $\hx\in\RR^d$ and $\bbeta >0$,  the following holds: $\|\delta\|_{2}= r$ and 
$$
R(x,\hx) <\bbeta \implies R(x+\delta, \hx)< \bbeta.
$$
\end{proposition}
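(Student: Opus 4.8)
The plan is to exploit the fact that a ReLU/max-pooling network is \emph{exactly linear} on each activation region, and that when $d>D$ such a linear map must have a nontrivial kernel into which we can hide a perturbation without changing the encoding.

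First I would unroll the recursion on the region $\Omega(x)$. By the definition of $\Omega(x)$, every activation diagonal $\dot\sigma^{[l]}(x',\theta)$ is constant and equal to $\dot\sigma^{[l]}(x,\theta)$ for all $x'\in\Omega(x)$. Substituting $\sigma^{[l]}\!\left(z^{[l]}\right)=\dot\sigma^{[l]}(x,\theta)\,z^{[l]}$ into the layer recurrence and composing from $l=1$ to $L$ shows that on $\Omega(x)$ the network collapses to a single linear map
$$
f_\theta(x') = W x', \qquad W := \dot\sigma^{[L]}(x,\theta)W^{[L]}\dot\sigma^{[L-1]}(x,\theta)W^{[L-1]}\cdots \dot\sigma^{[1]}(x,\theta)W^{[1]}\in\RR^{D\times d},
$$
where $W$ is fixed (independent of $x'\in\Omega(x)$). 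Establishing this exact linearity --- verifying that the activation patterns genuinely stay frozen on the region and that no bias terms survive, since the given recurrence $z^{[l]}(x,\theta)=W^{[l]}\sigma^{(l-1)}(z^{[l-1]}(x,\theta))$ carries no additive constant --- is the step requiring the most care; everything afterwards is linear algebra.

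Next I would use the dimension hypothesis. Since $W\in\RR^{D\times d}$ with $D<d$, we have $\operatorname{rank}(W)\le D<d$, so by rank--nullity $\dim\ker W = d-\operatorname{rank}(W)>0$ and hence $\ker W\neq\{0\}$. Picking a unit vector $u\in\ker W$ and setting $\delta := r\,u$ gives $\|\delta\|_2=r$ and $W\delta=0$. Because $\|\delta\|_2=r$, the point $x+\delta$ lies in the closed ball $\mathcal{B}_r(x)\subseteq\Omega(x)$, so the linear formula applies and $f_\theta(x+\delta)=W(x+\delta)=Wx+W\delta=Wx=f_\theta(x)$; the perturbation is invisible to the encoder.

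Finally I would conclude using the fact that the score $R$ compares a record to memory only through its encoding, i.e. $R(x,\hx)$ depends on $x$ solely via $f_\theta(x)$. Hence $f_\theta(x+\delta)=f_\theta(x)$ forces $R(x+\delta,\hx)=R(x,\hx)$ for every $\hx$, and the desired implication $R(x,\hx)<\bbeta \implies R(x+\delta,\hx)<\bbeta$ holds trivially (indeed with equality of scores). I would close by remarking that the construction is possible precisely because $\ker W\neq\{0\}$, which fails in general when $d\le D$ (there $W$ can be injective), matching the claim that such an undetectable anomaly is unique to the $d>D$ regime.
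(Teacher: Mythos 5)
Your proposal is correct and follows essentially the same route as the paper's proof: collapse the network to a single linear map $M\in\RR^{D\times d}$ on the activation region $\Omega(x)$, invoke rank--nullity to obtain a nonzero kernel vector, rescale it to norm $r$ inside $\mathcal{B}_r(x)$, and conclude $f_\theta(x+\delta)=f_\theta(x)$, hence equality of scores. Your write-up is in fact slightly more careful than the paper's at the last step, where you make explicit that $R$ depends on its first argument only through the encoding.
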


\begin{proof}
 We can rewrite the output of the function as
$
f_\theta(x)=\dot \sigma^{[L]}(x,\theta) W^{[L]} \dot \sigma^{[L-1]}(x,\theta)  W^{[L-1]}  \cdots 
W^{[2]}\dot \sigma^{[1]}(x,\theta) W^{[1]} x.
$
Thus, for any $\delta$ such that $(x+\delta) \in \mathcal{B}_{r}(x) \subseteq \Omega(x)$, we have
\begin{align*}
f_\theta(x+\delta)& =\dot \sigma^{[L]}(x+\delta,\theta) W^{[L]} \dot \sigma^{[L-1]}(x+\delta,\theta)  W^{[L-1]}  \cdots \\
& W^{[2]}\dot \sigma^{[1]}(x+\delta,\theta) W^{[1]} (x+\delta)
\\
& =\sigma^{[L]}(x,\theta) W^{[L]} \dot \sigma^{[L-1]}(x,\theta)  W^{[L-1]}  \cdots \\
& W^{[2]}\dot \sigma^{[1]}(x,\theta) W^{[1]} (x+\delta)
 \\ & = M x + M \delta 
\end{align*}
where $M = \sigma^{[L]}(x,\theta) W^{[L]} \dot \sigma^{[L-1]}(x,\theta)  W^{[L-1]}  \\ \cdots W^{[2]}\dot \sigma^{[1]}(x,\theta) W^{[1]}$. Notice that $M$ is a matrix of size $D$ by $d$. Thus,  ff $d>D$, there the nulls space (or the kernel space) of $M$ is not $\{0\}$ and there exists $\delta'\in\RR^d $ in the null space of $M$ such that $\|\delta'\|\neq 0$ and $M (r'\delta')=0$ for all $r'>0$. Thus, there exists a  $\delta \in \RR^d$ such that  $(x+\delta) \in \mathcal{B}_{r}(x) \subseteq \Omega(x)$,   $\|\delta\|_2 = r$, and $M \delta=0$, yielding
$$
f_\theta(x+\delta)=M x=f_\theta(x). 
$$
This implies the statement of this proposition. 
\end{proof}

\section{Datasets}
\label{app:datasets}

Table \ref{tab:datasets} contains the datasets that we use for evaluation. We briefly describe how these datasets are prepared for anomaly detection.

\begin{enumerate}
    \item \emph{KDDCUP99} \citep{KDDCup192:online} is based on the DARPA data set and is amongst the most extensively used data sets for multi-aspect anomaly detection. The original dataset contains samples of $41$ dimensions, $34$ of which are continuous and $7$ are categorical, and also displays concept drift \citep{minku2011ddd}. We use one-hot representation to encode the categorical features, and eventually, we obtain a dataset of $121$ dimensions. For the \emph{KDDCUP99} dataset, we follow the settings in \citep{zong2018deep}. As $20\%$ of data samples are labeled as ``normal" and $80\%$ are labeled as ``attack", normal samples are in a minority group; therefore, we treat normal ones as anomalous in this experiment and the $80\%$ samples labeled as attack in the original dataset are treated as normal samples.

    \item \emph{NSL-KDD} \citep{tavallaee2009detailed} solves some of the inherent problems of the \emph{KDDCUP99} dataset such as redundant and duplicate records and is considered more enhanced as compared to \emph{KDDCUP99}.

    \item \emph{CICIDS-DoS} \citep{sharafaldin2018toward} was created by the Canadian Institute of Cybersecurity. Each record is a flow containing features such as Source IP Address, Source Port, Destination IP Address, Bytes, Packets. These flows were captured from a real-time simulation of normal network traffic and synthetic attack simulators. This consists of the \emph{CICIDS-DoS} dataset ($1.05$ million records). \emph{CICIDS-DoS} has $5\%$ anomalies and contains samples of $95$ dimensions with a mixture of numeric and categorical features. For categorical features, we further used binary encoding to represent them because of the high cardinality.

    \item \emph{UNSW-NB15} \citep{moustafa2015unsw} was created by the Cyber Range Lab of the Australian Centre for Cyber Security (ACCS) for generating a hybrid of real modern normal activities and synthetic contemporary attack behaviors. This dataset has nine types of attacks, namely, Fuzzers, Analysis, Backdoors, DoS, Exploits, Generic, Reconnaissance, Shellcode, and Worms. It has $13\%$ anomalies.


    

    \item Ionosphere \citep{oddsdatasets} is derived using the ionosphere dataset from the UCI ML repository \citep{ucidatasets} which is a binary classification dataset with dimensionality $34$. There is one attribute having values of all zeros, which is discarded. So the total number of dimensions is $33$. The `bad' class is considered as outliers class and the `good' class as inliers.
    
    \item Cardio \citep{oddsdatasets} is derived using the Cardiotocography (Cardio) dataset from the UCI ML repository \citep{ucidatasets} which consists of measurements of fetal heart rate (FHR) and uterine contraction (UC) features on cardiotocograms classified by expert obstetricians. This is a classification dataset, where the classes are normal, suspect, and pathologic. For outlier detection, the normal class formed the inliers, while the pathologic (outlier) class is downsampled to $176$ points. The suspect class is discarded.
    
    \item Satellite \citep{oddsdatasets} is derived using the Statlog (Landsat Satellite) dataset from the UCI ML repository \citep{ucidatasets} which is a multi-class classification dataset. Here, the training and test data are combined. The smallest three classes, i.e. $2, 4, 5$ are combined to form the outliers class, while all the other classes are combined to form an inlier class.
    
    \item Satimage-2 \citep{oddsdatasets} is derived using the Statlog (Landsat Satellite) dataset from the UCI ML repository \citep{ucidatasets} which is also a multi-class classification dataset. Here, the training and test data are combined. Class $2$ is down-sampled to $71$ outliers, while all the other classes are combined to form an inlier class. The modified dataset is referred to as Satimage-$2$.
    
    \item Mammography \citep{oddsdatasets} is derived from openML\footnote{https://www.openml.org/}. The publicly available openML dataset has $11,183$ samples with $260$ calcifications. If we look at predictive accuracy as a measure of goodness of the classifier for this case, the default accuracy would be $97.68\%$ when every sample is labeled non-calcification. But, it is desirable for the classifier to predict most of the calcifications correctly. For outlier detection, the minority class of calcification is considered as the outlier class and the non-calcification class as inliers.
    
    \item Pima \citep{oddsdatasets} is the same as Pima Indians diabetes dataset of the UCI ML repository \citep{ucidatasets} which is a binary classification dataset. Several constraints were placed on the selection of instances from a larger database. In particular, all patients here are females at least $21$ years old of Pima Indian heritage.
    
    \item ForestCover \citep{oddsdatasets} is the ForestCover/Covertype dataset from the UCI ML repository \citep{ucidatasets} which is a multiclass classification dataset. It is used in predicting forest cover type from cartographic variables only (no remotely sensed data). This study area includes four wilderness areas located in the Roosevelt National Forest of northern Colorado. This dataset has $54$ attributes ($10$ quantitative variables, $4$ binary wilderness areas, and $40$ binary soil type variables). Here, an outlier detection dataset is created using only $10$ quantitative attributes. Instances from class $2$ are considered as normal points and instances from class $4$ are anomalies. The anomalies ratio is $0.9\%$. Instances from the other classes are omitted.

\end{enumerate}

\section{Memory Size and Update Thresholds}
Table \ref{tab:params} shows the memory size $N$, and the value of the threshold $\beta$.

\section{Baseline Parameters}
\label{app:baselines}

\paragraph{STORM:} window\_size=$10000$, max\_radius=$0.1$
\paragraph{HS-Tree:} window\_size=$100$, num\_trees=$25$, max\_depth=$15$, initial\_window\_X=None
\paragraph{iForestASD:}	window\_size=$100$, n\_estimators=$25$, anomaly\_threshold=$0.5$, drift\_threshold=$0.5$
\paragraph{RS-Hash:} sampling\_points=$1000$, decay=$0.015$, num\_components=$100$, num\_hash\_fns=$1$
\paragraph{RCF:} num\_trees=$4$, shingle\_size=$4$, tree\_size=$256$
\paragraph{LODA:} num\_bins=$10$, num\_random\_cuts=$100$
\paragraph{Kitsune:} max\_size\_ae=$10$, learning\_rate=$0.1$, hidden\_ratio=$0.75$, grace\_feature\_mapping=grace\_anomaly\_detector=$10\%$ of data,
\paragraph{DILOF:} window size = $400$, thresholds = [0.1f, 1.0f, 1.1f, 1.15f, 1.2f, 1.3f, 1.4f, 1.6f, 2.0f, 3.0f] , K = $8$
\paragraph{\textsc{xStream}:} projection size=$50$, number of chains=$50$, depth=$10$, rowstream=$0$, nwindows=$0$, initial sample size=\# rows in data, scoring batch size=$100000$
\paragraph{\textsc{MStream}:} alpha = $0.85$
\paragraph{Ex. IF:} ntrees=$200$, sample\_size=$256$, limit=None, ExtensionLevel=$1$

\end{document}